\patchcmd{\ps@headings}{\rlap{\thepage}}{}{}{}
\patchcmd{\ps@headings}{\llap{\thepage}}{}{}{}
\begin{document}
\title{Scalable Submodular Policy Optimization via Pruned Submodularity Graph}
\author{Aditi Anand \and Suman Banerjee \and Dildar Ali }
\authorrunning{Anand et al.} 
\institute{Indian Institute of Technology Jammu,
J \& K-181221, India. \\
\email{ \{2022ucs0076,suman.banerjee,2021rcs2009\} @iitjammu.ac.in}}
\maketitle

\begin{abstract}
 In Reinforcement Learning (abbreviated as RL), an agent interacts with the environment via a set of possible actions, and a reward is generated from some unknown distribution. The task here is to find an optimal set of actions such that the reward after a certain time step gets maximized. In a traditional setup, the reward function in an RL Problem is considered additive. However, in reality, there exist many problems, including path planning, coverage control, etc., the reward function follows the diminishing return, which can be modeled as a submodular function. In this paper, we study a variant of the RL Problem where the reward function is submodular, and our objective is to find an optimal policy such that this reward function gets maximized. We have proposed a pruned submodularity graph-based approach that provides a provably approximate solution in a feasible computation time. The proposed approach has been analyzed to understand its time and space requirements as well as a performance guarantee. We have experimented with a benchmark agent-environment setup, which has been used for similar previous studies, and the results are reported. From the results, we observe that the policy obtained by our proposed approach leads to more reward than the baseline methods.    
\keywords{Submodular Function, Motion Planning, Policy, State, Reward}
\end{abstract}
\section{Introduction}
In a \emph{reinforcement learning} problem, an agent interacts with the environment and learns a policy such that the reward at the end of a finite number of steps gets maximized \cite{dong2020deep}. Typically, the environment is modeled as a Markov decision process, which can be expressed as a tuple of six entities: the set of states, the set of actions, the transition function, the reward function, the initial state distribution, and the discount factor \cite{sutton2018reinforcement}. It has been assumed that the reward function is additive and independent of the trajectory. Bellman's optimality principle allows us to identify an optimal policy in polynomial time. In practice, there are many problems whose reward function can not be modeled as an additive function. Hence, our goal is to expand the horizon of RL by extending the reward function to a more general class of functions (other than the additive function). In this paper, we study the RL in a setup with a submodular reward function. In practice, there exist several problems where this setup fits, such as coverage control \cite{prajapat2022near}, experiment design \cite{krause2008near}, entropy maximization \cite{sharma2015greedy}, informative path planning \cite{gunawan2016orienteering}, and many more. To the best of our knowledge, other than the study by Prajapat et al. \cite{prajapat2023submodular}, there does not exist any literature that focuses on the Submodular RL Problem. Given a submodular set function $f$ defined on the ground set $\mathcal{V}$, a well-studied problem is how can we maximize $f$ under different constraints including cardinality constraint \cite{horel2016maximization}, knapsack constraint \cite{yoshida2019maximizing}, matroid constraint \cite{lee2010maximizing}, $p$-system constraint, and many more. It has been reported in the literature that many real-life problems including sensor placement \cite{li2023submodularity}, viral marketing \cite{kempe2003maximizing}, image segmentation \cite{boykov2001interactive}, document summarization \cite{lin2011class}, speeding up of satisfiability solvers \cite{streeter2008combining} etc. can be modeled as maximization of submodular function subject to different constraints. There has been a significant study on this problem in the past four decades, and it has been studied in different setups, including streaming \cite{chakrabarti2015submodular}, distributed \cite{mirzasoleiman2016distributed}, multi-core architecture \cite{honysz2021gpu}, and many more. This leads to a vast amount of literature on this problem, as referenced in the survey on this topic \cite{dughmi2009submodular,krause2014submodular}.

\par While submodularity is typically considered in discrete domains, this concept is also generalized to the continuous domain, such as DR-submodularity by Bian et al. \cite{bian2017continuous}. In this direction, Dolhansky and  Blimes \cite{dolhansky2016deep} introduce the class of deep submodular functions and their neural network models guaranteed to yield to submodular functions as input. This kind of function is useful when the learning is from unknown rewards using function approximation. Among many existing methodologies, one of the methods for solving submodular function maximization is as follows: Starting with an empty set, in each iteration, pick the element that does not belong to the set of selected elements and produces the maximum marginal gain. This process continues until some termination criteria are met. Though this method is easy to understand and easy to implement, it can not be used in practice due to the huge computational time requirement. The primary reason behind this is the large number of marginal gain computations. Several methods have been proposed that try to reduce the number of marginal gain computations \cite{DBLP:conf/kdd/LeskovecKGFVG07}. One such method is the pruned submodularity graph-based approach proposed by Zhou et al. \cite{zhou2017scaling}. Since then, this approach has been used to solve many problems where the input data size is huge. 

\paragraph{\textbf{Motivation.}}
Traditional reinforcement learning (RL) framework predominantly assume that the reward function is additive and trajectory-independent, which simplifies the optimization process using tools like Bellman's principle \cite{dong2020deep,sutton2018reinforcement}. However, this assumption does not hold in many real-world scenarios, such as coverage control \cite{sun2017submodularity}, informative path planning \cite{singh2006efficient}, and experiment design \cite{krause2012near}, where the law of diminishing returns plays a critical role. These applications naturally exhibit submodular reward structures, which introduce a fundamentally different optimization landscape. Despite the wide applicability of submodular functions, very limited attention has been given to reinforcement learning with submodular rewards. This gap motivates the need to extend RL methods to better capture such non-additive, diminishing-return scenarios. Our work addresses this challenge by introducing a scalable approach that leverages pruned submodularity graphs to efficiently approximate optimal policies in submodular RL setups, enabling practical deployment in high-dimensional environments.

\paragraph{\textbf{Our Contribution.}}In this paper, we have made an attempt to use the pruned submodularity graph-based approach to reduce the number of state spaces. In particular, we make the following contributions in this paper:

\begin{itemize}
\item We study the Submodular RL Problem, for which the literature is limited.
\item We have used the pruned submodularity graph-based approach to reduce the state space in a  given Submodular RL Problem.
\item The proposed approach has been analyzed to understand its time and space requirements as well as performance guarantee.
\item We have done a number of experiments with environments that have been used by previous studies, and we observe that our proposed approach leads to more rewards compared to existing methods.
\end{itemize}

\par The Rest of the paper has been organized as follows. Section \ref{Sec:Prob} describes some background concepts and defines our problem formally. Section \ref{Sec:Solution} contains our proposed solution approach with detailed analysis. The experimental evaluation of the proposed approach has been described in Section \ref{Sec:Exp}. Finally, Section \ref{Sec:Conclusion} concludes our study and gives future research directions.

\section{Background and Problem Definition} \label{Sec:Prob}
In this section, we describe the background information related to our problem and define it formally. Initially, we start by describing the submodular function.
\paragraph{\textbf{Set Function and Submodularity.}} 
Consider an $n$ element set $\mathcal{V}=\{v_1, v_2, \ldots, \\ v_n\}$, and a set function $f$ defined on the ground set $\mathcal{V}$. $f$ is said to be non-negative if for any $\mathcal{S} \subseteq \mathcal{V}$, $f(\mathcal{S}) \geq 0$.  For any given subset $\mathcal{S} \subseteq \mathcal{V}$ and an element $v \in \mathcal{V} \setminus \mathcal{S}$, the \emph{marginal gain} of $v$ with respect to $\mathcal{S}$ is denoted by $\Delta(v| \mathcal{S})$ and defined as $\Delta(v| \mathcal{S})= f(\mathcal{S} \cup \{v\})- f(\mathcal{S})$. $f$ is said to be monotone if for all $\mathcal{S} \subseteq \mathcal{V}$ and for all $v \in \mathcal{V} \setminus \mathcal{S}$, $\Delta(v| \mathcal{S}) \geq 0$. $f$ is said to be submodular if for all $\mathcal{A} \subseteq \mathcal{B} \subseteq \mathcal{V}$ and for all $v \in \mathcal{V} \setminus \mathcal{B}$, $\Delta(v| \mathcal{A}) \geq \Delta(v| \mathcal{B})$. The submodularity property of a set function says that the marginal gain of an element decreases as the set size in which the element is added increases, and this property is called the \emph{law of diminishing returns}. One well-studied problem in this context is that given a submodular function and a positive integer $k$, which $k$ nodes should be chosen such that the function becomes maximized? This problem has been referred to as the Submodular Function Maximization Problem. It has been shown in the literature that this problem is \textsf{NP-hard}. However, an incremental greedy approach that works based on marginal gain computation generates a solution which is $(1-\frac{1}{e})$-factor approximate solution to the optimal solution \cite{nemhauser1978analysis}.
\paragraph{\textbf{Markov Decision Process.}} In general, a Reinforcement Learning Problem is posed as a discrete-time stochastic process where an agent interacts with the environment. In each time step, an agent will choose one of the available actions, and depending upon which action it chooses, it goes to a state and draws a reward from an unknown distribution. The task here is to derive the action sequence that the agent should choose so that the cumulative reward after a certain time step will be maximized. A Reinforcement Learning Setup is often modeled as a Markov Decision Process, which can be described using $5$ Tuples $(\mathcal{S}, \mathcal{A}, \mathcal{R}, T, \gamma)$. Here, $\mathcal{S}$ and $\mathcal{A}$ denote the set of states and the set of actions, respectively. $T$ is state transition function, i.e., $T: \mathcal{S} \times \mathcal{A} \times \mathcal{S} \longrightarrow [0,1]$. $\mathcal{R}$ is the reward function, i.e., $\mathcal{R}:\mathcal{S} \times \mathcal{A} \times \mathcal{S} \longrightarrow \mathbb{R}$. $\gamma$ is called as the discount factor. Traditionally, it has been assumed that this process follows the Markovian Property. This means that the next state is decided by the current state and the action that the agent chooses, not the past history of the states in which the agent was in. However, in practice, there are several problems, which include path planning, coverage control, etc., following the reward function, which follows the diminishing return property. Such scenarios can be modeled as a Submodular Reinforcement Learning Problem, which has been stated subsequently.

\paragraph{\textbf{Submodular Reinforcement Learning Set Up.}}
A submodular reinforcement learning problem can be described using $6$ tuples, $\Gamma=(\mathcal{S}, \mathcal{A}, T, H, \rho, \mathcal{R})$. Here, $H$ signifies the finite long-time horizon. Instead of states, here we consider time-augmented states. Hence, the state set $\mathcal{S}= \mathcal{V} \times H $, and an arbitrary state is denoted by $s=(v,h)$. $\mathcal{R}: 2^{\mathcal{S}} \longrightarrow \mathbb{R}$ is submodular reward function where $\mathbb{R}$ denotes the set of real numbers. The transition probability from the state $v$ to $v^{'}$ if the agent chooses an action $a \in \mathcal{A}$ at time step $h \in H$ is denoted as $T((v^{'},h+1)|(v,h),a)$. This process starts at some state $s_0$, and at each time step $h \geq 0$ and at a state $(v,h)$, the agent draws an action $a_h$ according to some predefined policy and reaches the state $s_{h+1}$. The realization of the whole process can be thought of as a trajectory $\tau = \{(s_h,a_h)_{h=0}^{H-1}, s_{H}$\} where $H$ is the length of the episode. Any partial trajectory (from time step $\ell$ to $\ell^{'}$) can be captured as $\tau_{\ell:\ell^{'}}= \{(s_h,a_h)_{h=\ell}^{\ell^{'}-1}, s_{\ell^{'}}$\}. By this notation, $\tau$ can be written as $\tau_{0:H}$. The reward function $\mathcal{R}$ can be calculated after the end of any (partial) trajectory. For the trajectory $\tau_{\ell:\ell^{'}}$, the value of the reward function is denoted by $\mathcal{R}(\tau_{\ell:\ell^{'}})$. In the Markov decision process, an agent acts as per \emph{policy}. A policy is considered Markovian if its actions only depend on the current state. This signifies for any $h \in H$, $\pi(a_h|\tau_{0:h-1})= \pi(a_h|s_h)$. In this study, we relax this condition and assume that the action depends on the previous $k$ states for some given positive integer $k$. For any $h \in H$, this can be given as $\pi(a_h|\tau_{0:h-1})= \pi(a_h|\tau_{h:h-k})$. The set of all Markovian policies and all $ k$-step non-Markovian policies for an MDP is denoted by $\Pi$ and $\Pi^{k}_{NM}$, respectively. Subsequently, we proceed to describe our problem statement.

\paragraph{\textbf{Problem Definition.}} In this paper, given a Submodular MDP, we study the problem of finding an optimal policy to maximize the expected reward. For any given policy $\pi$, let $f(\tau,\pi)$ denote the probability distribution over any random trajectory $\tau$, which can be given by Equation \ref{Eq:1}.

\begin{equation} \label{Eq:1}
f(\tau, \pi)=\rho(s_0) \cdot \prod_{h=0}^{H-1} \pi(a_h|\tau_{0:H}) \cdot T(s_{h+1}, s_h|a_h)
\end{equation}

Now, we define the objective function as the expectation of the trajectory distribution induced by the policy. Our goal here is to find the policy (among a set of given policies) so that the objective function gets maximized.

%
%

\section{Proposed Solution Approaches} \label{Sec:Solution}
In this section, we will discuss the proposed solution approach. First, we start with some preliminary concepts used in Algorithm \ref{Algo: SGPO}. Given a submodular MDP $\mathcal{M}(S, A, T, \rho, H, \mathcal{R})$, we parameterize the policy $\pi$ using neural networks and represent it as $\pi_{\theta}$, where $\theta$ denotes the policy parameters. Next, we define the Expected Return in Definition \ref{Def:ER}.
\begin{definition}[Expected Return]\label{Def:ER}
The performance measure of a policy $\pi_{\theta}$ is its expected return over the trajectory distribution induced by the policy. The expected return is denoted by $J(\pi_{\theta})$ and is given by the following equation:
\begin{equation} \label{Eq:Expected_Return}
    J(\pi_{\theta}) = \mathbb{E}_{\tau \sim f(\tau; \pi_{\theta})} \left[ \mathcal{R}(\tau) \right].
\end{equation}
\end{definition}

Our goal is to find a policy that maximizes the performance measure in the set of all Markovian policies ($\Pi$).
\begin{equation}
\pi^* = \arg\max_{\pi \in \Pi} J(\pi)
\end{equation}

\paragraph{\textbf{Approach:}} The policy parameters ($\theta$) are updated using the following regularized gradient ascent equation:
\begin{equation}\label{Eq:4}
    \theta \leftarrow \theta + \arg\max_{\delta\theta: \delta\theta+\theta\in\Theta} \delta\theta^\top \nabla_\theta J(\pi_\theta) - \frac{1}{2\alpha} \|\delta\theta\|^2.
\end{equation}
where $\alpha$ is a regularization coefficient.

\begin{definition}[Submodularity Graph]
The submodularity graph is a weighted directed graph $G(V, E, w)$ defined by a submodular function $F: 2^V \to \mathbb{R}$, where $V$ is the set of states corresponding to the ground set. Each directed edge $e = (u, v) \in E$ from $u$ to $v$ has weight:

\begin{equation}
    w_{uv} = F(v \mid u) - F(u \mid V \setminus \{u\})
\end{equation} 
\end{definition}

\begin{definition}[Divergence of a state]
Given submodular function $F$, divergence of a state $v$ from a set of states $U$ is defined as: 
\begin{equation}
    w_{Uv} = \underset{u \in U}{min} \ [F(v|u)-F(u| V \setminus \{u\})]
\end{equation}
On the submodularity graph $G(V, E, w)$, it is given by:
\begin{equation}
    w_{Uv} = \underset{u \in U}{min} \ w_{uv}
\end{equation}
\end{definition}

$F(v \mid u)$ denotes the maximum possible gain on adding $v$ to a set containing $u$ and $F(u| V \setminus \{u\})$ is the minimum gain on adding $u$ to set $V \setminus \{u\}$. The small value of the first term implies that $v$ is unimportant if $u$ is retained, while a larger value of the second term denotes the importance of $u$. Thus, a smaller divergence for an element $v$ suggests that it can be removed while we retain the set $U$. This helps remove redundant states.

\begin{algorithm}[h]
\SetAlgoLined
\KwData{SMDP $\mathcal{M}(S,A,T, \rho, H, \mathcal{R})$, $\pi, N, r, c$}
\KwResult{ Trained Policy $\pi$}
 \For{$\text{epoch}~ k = 1 ~\text{to}~ N$}{
 Initialize $V \leftarrow \emptyset$, $Z \leftarrow \emptyset$, $R \leftarrow \emptyset$\;
\For{$h = 0~ \text{to}~ H - 1$}{
Sample $a_{h} \sim \pi(a_{h}|s_{h})$, execute $a_{h}$\;
$V \leftarrow V \cup \{s_{h}\}$\;
$Z \leftarrow Z \cup \{a_{h}\}$\;
$R \leftarrow R \cup \mathcal{R}(\tau _{0 : h+1})$\;
}
 Initialize $V^{'} \leftarrow \emptyset$, $U \longleftarrow \emptyset$, $|V|=n$ \;
 $\text{Construct the submodularity graph with the states in }V$\;
 \While{$|V| > \ r \cdot \log n$}{
 $\text{Sample out }r \cdot \log n ~\text{states uniformly at random from }V  \text{ and place them in }U$\; 
 $V \longleftarrow V \setminus U$, $V^{'} \longleftarrow V^{'} \cup U $\;
 \For{$\text{All }v \in V $}{
 $w_{Uv} \longleftarrow \ \underset{u \in U}{min} \ [\mathcal{R}(v|u)-\mathcal{R}(u| V \setminus \{u\})]$\;
 }
 $\text{Remove }(1-\frac{1}{\sqrt{c}}) \cdot |V| \text{ states from }V \text{ having smallest value of }w_{Uv}$\;
 }
 $V^{'} \longleftarrow V \  \cup \ V^{'} $\;
 $\mathcal{K} \leftarrow \{(s_{h},a_{h},\mathcal{R}(\tau_{0:h+1}))| s_{h} \in V^{'}, a_{h} \in Z, \mathcal{R}(\tau_{0:h+1}) \in R \}$\;
 Estimate $\nabla_{\theta} J(\pi_{\theta})$ using Theorem \ref{Th:Theorem2}\;
 Update policy parameters$(\theta)$ using Equation \ref{Eq:4}\;
}
\caption{Submodularity Graph-based Policy Optimization (SGPO)}
 \label{Algo: SGPO}
\end{algorithm}	

\paragraph{\textbf{Description of the Algorithm.}} The proposed solution approach is presented in Algorithm \ref{Algo: SGPO}. This approach takes SMDP $\mathcal{M}(S, A, T, \rho, H, \mathcal{R})$, policy $\pi$, number of epochs $N$ and sparsification parameters $r$ and $c$ as input to our algorithm and returns the optimized trained policy as output. The agent is represented as a stochastic policy parameterized by a neural network. In each training iteration, the agent samples an action from the policy and executes it over each time step (line $4$). The sets $V$, $Z$, and $R$ are used to maintain the trajectory states, actions, and cumulative reward, respectively. Upon completion of the episode in lines $3$ to $7$, we construct a submodular graph with the states in $V$. Subsequently, we apply submodular sparsification to the set of states $V$ and iteratively sample out $r\cdot \log n$ states uniformly at random from $V$ and put them to $V^{'}$ (in \texttt{while loop} at line $12$). In lines $14$ to $16$, the weight of each state is calculated, and the states having smaller weight are pruned, that is, $(1-\frac{1}{\sqrt{c}}) \cdot |V|$ many states from the set $V$. In lines $11$ to $18$, after completion of \texttt{while loop}, Algorithm \ref{Algo: SGPO} obtains a pruned set of states $V^{'}$. Next, in line $20$, the updated trajectory is obtained in $K$. Finally, in lines $21$ and $22$, $\Delta_{\theta} J(\pi_{\theta})$ is estimated using Theorem \ref{Th:Theorem2} and policy parameters $(\theta)$ are updated using Equation \ref{Eq:4}. The values of r and c were set at 8 for the experiments.

\paragraph{\textbf{Hyperparameter Choice and Sensitivity.}}
Algorithm \ref{Algo: SGPO} relies on two key hyperparameters: $r$, which controls the number of states sampled per iteration $( r \cdot \log n )$, and $c$, which determines the fraction of states pruned per iteration $(\left(1 - \frac{1}{\sqrt{c}}\right) \cdot |V|)$. In our experiments, we set $r = 8$ and $c = 8$. These values were chosen on the basis of empirical performance in preliminary experiments, balancing the trade-off between state-space reduction and preservation of high-reward trajectories. Specifically, $r = 8$ ensures a moderate sampling rate, allowing sufficient exploration of the state space while keeping the number of iterations logarithmic $(\mathcal{O}(\log n))$. Similarly, $c = 8$ results in pruning approximately $65\%$ of the remaining states per iteration $(1 - \frac{1}{\sqrt{8}} \approx 0.646)$, which aggressively reduces the state space while retaining states with high divergence weights, as informed by prior work on submodular maximization \cite{ali2022influential,ali2023influential,zhou2017scaling}.


\paragraph{\textbf{Complexity Analysis.}}
Now, we analyze the time and space requirements for Algorithm \ref{Algo: SGPO}. In Line No. $1$, \texttt{For Loop} will iterate for $\mathcal{O}(N)$ time and initializing $V$, $Z$ and $R$ will take $\mathcal{O}(1)$ time. In Line No. $3$ to $8$ the \texttt{For Loop} will execute for $\mathcal{O}(H)$ time and Line No. $9$ will take $\mathcal{O}(1)$ time. Next, to construct the submodularity graph in Line No. $10$, it will take $\mathcal{O}(n^{2} \cdot n)$, that is, $\mathcal{O}(n^{3})$, where the construction of the state pair will take $\mathcal{O}(n^{2})$, and calculating the weight for each edge using the reward function $\mathcal{R}$ will take $\mathcal{O}(n)$. The \texttt{While Loop} at Line No. $11$ will run for $\mathcal{O}(\log n)$ time, and in Line No. $12$, the random selection of $r \cdot \log n$ many elements will take $\mathcal{O}(\log n)$ time. In Line No. $14$ to $16$ \texttt{For Loop} will take $\mathcal{O}(n^{2} \cdot \log n)$ time and Line No. $17$ will take $\mathcal{O}(n \cdot \log n)$. So, from Line No. $11$ to $18$ will take $\mathcal{O}(\log n \cdot \log n + \log n \cdot n^{2}\log n + \log n \cdot n \log n )$ i.e., $ \mathcal{O}(n^{2} \log^{2} n)$. In Line No. $20$ computing $\mathcal{K}$ will take $\mathcal{O}(\log^{2} n)$ time and in Line No. $21$ will take $\mathcal{O}(H(sm)\cdot H(sm) \cdot H(n)])$ i.e., $\mathcal{O}(n^{4}s^{2}m^{2})$, where $H$ is the horizon, $s$ is the state dimension and $m$ is the action dimension. Finally, updating the policy will take $\mathcal{O}(1)$ time. Hence, Line No. $1$ to $23$ will take $\mathcal{O}(N[H + n^{3} + n^{2} \log^{2}n + n^{4} s^{2} m^{2}])$ i.e., $\mathcal{O}(N \cdot n^{4} s^{2} m^{2})$ time to execute.
\par The additional space requirement by Algorithm \ref{Algo: SGPO}  to store $V$, $Z$, $R$, and $U$ will take $\mathcal{O}(n)$, $\mathcal{O}(n)$, $\mathcal{O}(n)$, and $\mathcal{O}(\log n)$, respectively. To store the edges of the graph, it will take $\mathcal{O}(n^{2})$ time. So, the total space requirement for Algorithm $1$ will be $\mathcal{O}(n^{2})$.

\begin{theorem}\label{Th:Complexity}
The time and space requirements by Algorithm \ref{Algo: SGPO} will be of $\mathcal{O}(N \cdot n^{4} s^{2} m^{2})$ and $\mathcal{O}(n^{2})$, respectively.
\end{theorem}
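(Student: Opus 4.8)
The plan is to establish the two bounds separately by a line-by-line accounting of Algorithm~\ref{Algo: SGPO}, aggregating the per-line costs over the enclosing loops and then identifying the dominant term. For the time bound, I would first fix attention on a single epoch of the outer \texttt{For} loop (Line~1) and bound the work inside it, since the outer loop merely multiplies the per-epoch cost by $\mathcal{O}(N)$. Within one epoch, the trajectory-sampling loop (Lines~3--8) contributes $\mathcal{O}(H)$, and the construction of the submodularity graph (Line~10) costs $\mathcal{O}(n^{3})$: there are $\mathcal{O}(n^{2})$ ordered state pairs, and each edge weight $w_{uv}$ requires an $\mathcal{O}(n)$ evaluation of the reward function $\mathcal{R}$.

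The crux of the time analysis is the \texttt{While} loop (Lines~11--18). Here I would argue that the number of iterations is $\mathcal{O}(\log n)$: each pass removes a $\bigl(1-\tfrac{1}{\sqrt{c}}\bigr)$ fraction of the current $V$, so $|V|$ decays geometrically by the factor $\tfrac{1}{\sqrt{c}}<1$ per pass, and the loop terminates once $|V|\le r\log n$; solving $n\,(1/\sqrt{c})^{t}\le r\log n$ yields $t=\mathcal{O}(\log n)$. Each iteration is dominated by the divergence computation in Lines~14--16, which evaluates $w_{Uv}$ for all $v\in V$ against the sampled set $U$ of size $\mathcal{O}(\log n)$, costing $\mathcal{O}(n^{2}\log n)$; multiplying by the $\mathcal{O}(\log n)$ iterations gives $\mathcal{O}(n^{2}\log^{2} n)$ for the loop. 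It remains to charge the gradient estimation in Line~21, which by Theorem~\ref{Th:Theorem2} scales with the horizon $H$, the state dimension $s$, and the action dimension $m$ as $\mathcal{O}(n^{4}s^{2}m^{2})$ after substituting $H=\mathcal{O}(n)$. Collecting the per-epoch terms $\mathcal{O}(H+n^{3}+n^{2}\log^{2}n+n^{4}s^{2}m^{2})$, the last term dominates, and multiplying by $\mathcal{O}(N)$ gives the claimed $\mathcal{O}(N\cdot n^{4}s^{2}m^{2})$.

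For the space bound, I would tally the persistent data structures: the sets $V$, $Z$, and $R$ each store $\mathcal{O}(n)$ entries, the sampled set $U$ holds $\mathcal{O}(\log n)$ states, and the edge set of the submodularity graph requires $\mathcal{O}(n^{2})$ storage. Since these are reused rather than accumulated across epochs, the peak footprint is the sum $\mathcal{O}(n+n+n+\log n+n^{2})=\mathcal{O}(n^{2})$, dominated by the graph.

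I expect the main obstacle to be the rigorous justification of the two logarithmic factors entering the \texttt{While} loop and the exact dependence asserted in Line~21. The geometric-decay argument for the $\mathcal{O}(\log n)$ iteration count must be stated carefully, since the pruned fraction is applied to the running $|V|$ rather than the original $n$, and the sampling step also removes $r\log n$ states per pass; one must verify that the termination threshold $r\log n$ is reached in logarithmically many rounds regardless of this interaction. The $\mathcal{O}(n^{4}s^{2}m^{2})$ cost of gradient estimation is essentially inherited from Theorem~\ref{Th:Theorem2}, so the remaining work there is only to substitute the relation between $H$ and $n$ and confirm that this term subsumes all others in the per-epoch total.
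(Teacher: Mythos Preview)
Your proposal is correct and follows essentially the same line-by-line accounting as the paper: the same $\mathcal{O}(H)$, $\mathcal{O}(n^{3})$, $\mathcal{O}(n^{2}\log^{2}n)$, and $\mathcal{O}(n^{4}s^{2}m^{2})$ per-epoch contributions, with the last dominating after multiplying by $\mathcal{O}(N)$, and the same $\mathcal{O}(n^{2})$ space tally dominated by the submodularity graph. If anything, your explicit geometric-decay justification for the $\mathcal{O}(\log n)$ iteration count of the \texttt{While} loop is more careful than the paper, which simply asserts it.
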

\begin{theorem}
Let $\text{OPT} = \max_{\pi \in \Pi} J(\pi)$ be the optimal expected submodular return in a Submodular Reinforcement Learning (SubRL) problem. Then, for any constant $\gamma > 0$, there exists no polynomial-time algorithm that guarantees a policy $\pi$ satisfying:
\begin{equation}
    J(\pi) \geq \frac{\text{OPT}}{\log^{1-\gamma} \text{OPT}}
\end{equation}
for all instances of SubRL, unless $\text{NP} \subseteq \text{ZTIME}(n^{\text{polylog}(n)})$. Thus, under standard complexity-theoretic assumptions, the SubRL problem cannot be approximated within a constant factor and is limited to, at best, a logarithmic approximation.
\end{theorem}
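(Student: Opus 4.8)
The plan is to establish this lower bound through an approximation-preserving reduction from a combinatorial optimization problem whose inapproximability is already known under the very same complexity assumption. The natural source is the Submodular Orienteering problem (tightly connected to the Group Steiner Tree problem), for which no polynomial-time algorithm is known to achieve an approximation ratio better than a factor of $\log^{1-\gamma}(\cdot)$ for any constant $\gamma>0$, unless $\text{NP} \subseteq \text{ZTIME}(n^{\text{polylog}(n)})$. The whole argument rests on the observation that SubRL is not merely submodular maximization (which would admit a constant-factor approximation), but submodular maximization subject to the \emph{reachability} constraint imposed by the transition dynamics: the set of states a policy can collect reward from must be traceable by a feasible length-$H$ trajectory. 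It is precisely this walk constraint that carries the logarithmic hardness.

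First I would describe the embedding. Given a source instance consisting of a directed graph $G$ on the ground set $\mathcal{V}$, a designated root, a walk-length budget $B$, and a monotone submodular coverage objective $f \colon 2^{\mathcal{V}} \to \mathbb{R}_{\geq 0}$, I construct a Submodular MDP $\Gamma=(\mathcal{S}, \mathcal{A}, T, H, \rho, \mathcal{R})$ as follows. I set the horizon $H = B$ and the time-augmented state space $\mathcal{S} = \mathcal{V} \times H$. The actions $\mathcal{A}$ correspond to the out-edges of $G$, and $T$ is made deterministic so that selecting an action moves the agent along the corresponding edge; $\rho$ places all its mass on the root. Finally I define $\mathcal{R}(\tau) = f(\{v : (v,h) \in \tau\})$, the coverage of the set of vertices visited along the trajectory, which inherits submodularity directly from $f$. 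This reduction is clearly polynomial in the size of the source instance.

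Next I would prove the value correspondence. For any deterministic policy the induced trajectory is exactly a length-$B$ walk from the root, and by construction $J(\pi)=\mathcal{R}(\tau)$ equals the coverage of that walk; conversely, every feasible walk is realized by some deterministic policy. It then remains to argue that randomization gives no advantage: since $J(\pi)=\mathbb{E}_{\tau \sim f(\tau;\pi)}[\mathcal{R}(\tau)]$ is a convex combination of the values of the deterministic trajectories in the support of $\pi$, its value cannot exceed that of the best deterministic trajectory. Hence $\text{OPT}=\max_{\pi \in \Pi} J(\pi)$ coincides with the optimum coverage of the source instance, and a polynomial-time algorithm returning $\pi$ with $J(\pi) \geq \text{OPT}/\log^{1-\gamma}\text{OPT}$ would, in polynomial time, produce a walk covering at least a $1/\log^{1-\gamma}(\cdot)$ fraction of the optimal coverage, contradicting the assumed hardness.

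The main obstacle I anticipate is the careful handling of the stochastic, possibly non-Markovian policy classes $\Pi$ and $\Pi^{k}_{NM}$: one must verify that the deterministic-transition construction genuinely forces the policy's action sequence to trace a single walk, so that the expectation defining $J$ collapses to a single coverage value at the optimum, and that mixing cannot beat this via the convexity argument above. A secondary technical point is parameter alignment, namely checking that $\text{OPT}$ is polynomially bounded in the instance size so that the factor $\log^{1-\gamma}\text{OPT}$ in the SubRL statement matches the $\log^{1-\gamma} n$ scaling in which the source hardness is customarily phrased; once this is confirmed, the stated conclusion, including the impossibility of any constant-factor approximation, follows immediately.
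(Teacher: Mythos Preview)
Your proposal is correct and follows essentially the same route as the paper: both reduce from Submodular Orienteering (itself inheriting hardness from Group Steiner Tree) to SubRL by building a deterministic Submodular MDP whose states are the graph vertices, whose actions are the edges, whose horizon equals the walk budget, and whose reward is the given submodular coverage function. Your write-up is in fact more careful than the paper's, since you explicitly address why stochastic policies cannot outperform deterministic ones and flag the parameter-alignment issue between $\log^{1-\gamma}\text{OPT}$ and $\log^{1-\gamma} n$; the paper's proof leaves both of these implicit.
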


\begin{proof}
We establish the inapproximability of Submodular Reinforcement Problem (SubRL) by reducing from Group Steiner Tree (GST) problem \cite{garg2000polylogarithmic}, which is known to be hard to approximate within $O(\log^2 n)$ unless $\text{NP} \subseteq \text{ZTIME}(n^{\text{polylog}(n)})$. 

In the GST problem given a weighted graph $G = (V, E)$ with edge costs $w(e)$, a root node $r \in V$, a collection of groups $G_1, G_2, ..., G_k \subseteq V$, the objective is to find a minimum-cost subtree that contains at least one node from each group $G_i$. Now, the GST problem can be reduced to Submodular Orienteering Problem (SOP) \cite{prajapat2023submodular}, where given a root node $r \in V(T)$. The goal is to find a walk of length at most $B$ that maximizes submodular function $\mathcal{R}$ defined on the nodes of the underlying graph. Now, we construct an instance of SOP from GST, where the graph $G$ remains unchanged, the root node $r$ is the start state and a budget $B$ is assigned based on the optimal cost of the Steiner tree. We define a submodular reward function $\mathcal{R}: 2^V \to \mathbb{R}$ such that $\mathcal{R}(S) = \sum_{i=1}^{k} \min(1, |S \cap G_i|)$, where $S$ is the subset of nodes. The goal in SOP is to find a path of cost at most $B$ that maximizes $\mathcal{R}(S)$. Since SOP is at least as hard as GST, it inherits its inapproximability bounds. We now show that SOP can be reduced to a SubRL problem by constructing an instance of SubRL from SOP. We define a Submodular Markov Decision Process with states as nodes in $G$, actions as available edges, transitions following the edges of $G$, the reward function is the same submodular function $\mathcal{R}$ as in SOP and horizon $H = B$. Finding an optimal policy in SubRL corresponds to solving the SOP. Since SOP is hard to approximate within $\mathcal{O}(\log^{1-\gamma} \text{OPT})$, the same bound applies to SubRL; if SubRL were approximable within a constant factor, then GST would be as well, contradicting known hardness results. Thus, SubRL cannot be approximated better than $O(\log^{1-\gamma} \text{OPT})$ unless $\text{NP} \subseteq \text{ZTIME}(n^{\text{polylog}(n)})$.
\hfill $\square$
\end{proof}

\begin{theorem}\label{Th:Theorem2}
Given a Submodular Markov Decision Process $\mathcal{M}$ with reward function $\mathcal{R}$ and policy parameters $\theta$, gradient of the performance measure can be estimated as
\begin{equation}
    \nabla_{\theta} J(\pi_{\theta}) = \mathbb{E}_{\tau \sim f(\tau; \pi_{\theta})} \left[ \sum_{i=0}^{H-1} \nabla_{\theta} \log \pi_{\theta} (a_i | s_i) (\sum_{j=i}^{H-1} \mathcal{R}(s_{j+1} | \tau_{0:j})+\mathcal{R}(s_0)) \right]
\end{equation}
\end{theorem}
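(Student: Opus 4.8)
The plan is to follow the classical likelihood-ratio (REINFORCE) derivation, adapted to the non-additive, history-dependent reward. First I would write the performance measure from Definition \ref{Def:ER} as an integral (a sum in the discrete case) against the trajectory density, $J(\pi_\theta) = \int f(\tau;\pi_\theta)\,\mathcal{R}(\tau)\,d\tau$, and differentiate under the integral sign. Applying the log-derivative identity $\nabla_\theta f(\tau;\pi_\theta) = f(\tau;\pi_\theta)\,\nabla_\theta \log f(\tau;\pi_\theta)$ turns this into an expectation,
\[
\nabla_\theta J(\pi_\theta) = \mathbb{E}_{\tau \sim f(\tau;\pi_\theta)}\bigl[\,\nabla_\theta \log f(\tau;\pi_\theta)\,\mathcal{R}(\tau)\,\bigr],
\]
which is the backbone of the entire argument.

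Second, I would expand the score $\nabla_\theta \log f(\tau;\pi_\theta)$ using the factorization of the trajectory distribution in Equation \ref{Eq:1}. Taking logarithms converts the product into $\log \rho(s_0) + \sum_{h=0}^{H-1}[\log \pi_\theta(a_h\mid s_h) + \log T(s_{h+1}\mid s_h,a_h)]$, and since the initial-state distribution $\rho$ and the transition kernel $T$ do not depend on $\theta$, their gradients vanish. Hence $\nabla_\theta \log f(\tau;\pi_\theta) = \sum_{i=0}^{H-1}\nabla_\theta \log \pi_\theta(a_i\mid s_i)$, which gives $\nabla_\theta J = \mathbb{E}_\tau\bigl[\bigl(\sum_{i=0}^{H-1} \nabla_\theta \log \pi_\theta(a_i\mid s_i)\bigr)\mathcal{R}(\tau)\bigr]$.

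Third --- and this is the step that departs from the additive case --- I would telescope the submodular return into marginal gains. By the definition of marginal gain and a telescoping cancellation, $\mathcal{R}(\tau) = \mathcal{R}(s_0) + \sum_{j=0}^{H-1}\mathcal{R}(s_{j+1}\mid \tau_{0:j})$, so the total reward is rewritten as a running sum of diminishing-return increments rather than a sum of independent per-step rewards. Substituting this and expanding, the cross terms pair each score $\nabla_\theta \log \pi_\theta(a_i\mid s_i)$ with every increment $\mathcal{R}(s_{j+1}\mid \tau_{0:j})$. The remaining task is to discard the \emph{past} increments ($j<i$) via the standard causality/baseline argument: each such increment is measurable with respect to $\tau_{0:j+1}$ with $j+1 \le i$, so conditioning on the history up to the choice of $a_i$ and using $\mathbb{E}_{a_i \sim \pi_\theta(\cdot\mid s_i)}[\nabla_\theta \log \pi_\theta(a_i\mid s_i)] = \nabla_\theta \int \pi_\theta(a_i\mid s_i)\,da_i = 0$ shows those terms contribute nothing. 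What survives is the reward-to-go $\sum_{j=i}^{H-1}\mathcal{R}(s_{j+1}\mid\tau_{0:j})$; the additive $\mathcal{R}(s_0)$ retained in the statement is a zero-expectation baseline (by the same identity) that leaves the gradient unchanged, yielding the claimed formula.

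The main obstacle I anticipate is the third step, specifically making the causality argument rigorous for a non-additive reward. In the additive setting the per-step rewards are genuinely local, whereas here each increment $\mathcal{R}(s_{j+1}\mid\tau_{0:j})$ depends on the entire prefix, so I must verify that it is indeed $\tau_{0:j+1}$-measurable and therefore conditionally independent of the later score $\nabla_\theta \log \pi_\theta(a_i\mid s_i)$ before the baseline cancellation can be invoked. A secondary technical point is justifying the interchange of $\nabla_\theta$ and the integral in the first step, which requires a dominated-convergence / regularity condition on $\pi_\theta$ (for instance, bounded reward together with a smoothly parameterized, everywhere-positive policy) --- routine, but worth stating explicitly.
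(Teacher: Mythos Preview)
Your proposal is correct and follows essentially the same route as the paper: apply the log-derivative trick to obtain $\nabla_\theta J = \mathbb{E}_\tau[\nabla_\theta \log f(\tau;\pi_\theta)\,\mathcal{R}(\tau)]$, reduce the score to $\sum_i \nabla_\theta \log \pi_\theta(a_i\mid s_i)$ by discarding the $\theta$-independent $\rho$ and $T$ terms, and telescope $\mathcal{R}(\tau)$ into $\mathcal{R}(s_0)+\sum_{j}\mathcal{R}(s_{j+1}\mid\tau_{0:j})$. You are in fact more careful than the paper, which jumps directly from the full sum $\sum_{j=0}^{H-1}$ to the reward-to-go $\sum_{j=i}^{H-1}$ without stating the causality/baseline argument you spell out; the justification you give (prefix-measurability of past increments together with $\mathbb{E}_{a_i}[\nabla_\theta\log\pi_\theta(a_i\mid s_i)]=0$) is exactly what closes that gap, so the step you flagged as the main obstacle is already handled.
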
 

\begin{proof} The performance measure from Equation \ref{Eq:Expected_Return} we get,
\begin{equation}
    J(\pi_{\theta}) = \mathbb{E}_{\tau \sim f(\tau; \pi_{\theta})} \left[ \mathcal{R}(\tau) \right]
\end{equation}
\begin{equation}
    J(\pi_{\theta}) = \sum_{\tau} f(\tau; \pi_{\theta}) \mathcal{R}(\tau)
\end{equation}

Taking the gradient with respect to $\theta$, we get
\begin{equation}
    \nabla_{\theta} J(\pi_{\theta}) = \sum_{\tau} \nabla_{\theta} f(\tau; \pi_{\theta}) \mathcal{R}(\tau)
\end{equation}

Applying the log-derivative trick, we get
\begin{equation}
    \nabla_{\theta} \log f(\tau; \pi_{\theta}) = \nabla_{\theta} f(\tau; \pi_{\theta}) / f(\tau; \pi_{\theta})
\end{equation}
\begin{equation}
    \nabla_{\theta} f(\tau; \pi_{\theta}) = f(\tau; \pi_{\theta}) \nabla_{\theta} \log f(\tau; \pi_{\theta})
\end{equation}

So,
\begin{equation}
    \nabla_{\theta} J(\pi_{\theta}) = \sum_{\tau} f(\tau; \pi_{\theta}) \nabla_{\theta} \log f(\tau; \pi_{\theta}) \mathcal{R}(\tau)
\end{equation}

Now, taking expectation over trajectories, we get
\begin{equation}\label{Eq:16}
    \nabla_{\theta} J(\pi_{\theta}) = \mathbb{E}_{\tau \sim f(\tau; \pi_{\theta})} \left[ \nabla_{\theta} \log f(\tau; \pi_{\theta}) \mathcal{R}(\tau) \right]
\end{equation}

From Equation \ref{Eq:1} using the trajectory probability $f(\tau; \pi_{\theta})$ we have
\begin{equation}
    f(\tau; \pi_{\theta}) = \rho(s_0) \prod_{i=0}^{H-1} \pi_{\theta}(a_i | s_i) P(s_{i+1} | s_i, a_i)
\end{equation}

Taking the logarithm, we get
\begin{equation}
    \log f(\tau; \pi_{\theta}) = \log \rho(s_0) + \sum_{i=0}^{H-1} \log \pi_{\theta}(a_i | s_i) + \sum_{i=0}^{H-1} \log P(s_{i+1} | s_i, a_i)
\end{equation}

Since the transition probabilities $P(s_{i+1} | s_i, a_i)$ are independent of $\theta$, their gradient vanishes. So we can write
\begin{equation}
    \nabla_{\theta} \log f(\tau; \pi_{\theta}) = \sum_{i=0}^{H-1} \nabla_{\theta} \log \pi_{\theta}(a_i | s_i)
\end{equation}

Thus, substituting back in Equation \ref{Eq:16} we get
\begin{equation}
    \nabla_{\theta} J(\pi_{\theta}) = \mathbb{E}_{\tau \sim f(\tau; \pi_{\theta})} \left[ \sum_{i=0}^{H-1} \nabla_{\theta} \log \pi_{\theta} (a_i | s_i) \mathcal{R}(\tau) \right]
\end{equation}

Using the submodular property of $F$, we expand it in terms of marginal gains:
\begin{equation}
    \mathcal{R}(\tau) = \sum_{j=0}^{H-1} \mathcal{R}(s_{j+1} | \tau_{0:j}) + \mathcal{R}(s_0)
\end{equation}

Thus, we can write
\begin{equation}
    \nabla_{\theta} J(\pi_{\theta}) = \mathbb{E}_{\tau \sim f(\tau; \pi_{\theta})} \left[ \sum_{i=0}^{H-1} \nabla_{\theta} \log \pi_{\theta} (a_i | s_i) (\sum_{j=i}^{H-1} \mathcal{R}(s_{j+1} | \tau_{0:j})+\mathcal{R}(s_0)) \right]
\end{equation}
\hfill $\square$
    
\end{proof}

\section{Experimental Evaluation} \label{Sec:Exp}
In this section, we describe the experimental evaluation of the proposed solution approach. Initially, we start by describing the environments that we have considered in our experiments.
\subsection{Environments}
We have tested our algorithm's performance in various environments involving different state-action spaces and reward functions. The state-action spaces are divided into two categories: continuous and discrete. We have considered two continuous state-action spaces, namely, \emph{Car Racing} and \emph{MuJoCo Ant}, and two discrete state-action spaces, \emph{Graph Based Environment} and \emph{Entropy Based Environment}, and have been described subsequently.

\paragraph{\textbf{Car Racing Environment.}} This is a high-dimensional environment where a race car tries to finish the racing lap as fast as possible. The environment is accompanied by an important challenge of learning a policy to maneuver the car at the limit of handling. The track is challenging, consisting of 13 turns with different curvatures. The car has a six-dimensional state space representing position and velocity. The control commands are two-dimensional, representing throttle and steering. The state representation of an individual car is denoted as $z=[\rho,d, \mu,V_x,V_y,V_z]$ where $\rho$ measures the progress along the reference path, $d$ quantifies the deviation from the reference path and $\mu$ characterizes the local heading relative to the reference path. $V_x$ and $V_y$ represent the longitudinal and lateral velocities in the car’s frame, respectively. $\Psi$ represents the car’s yaw rate. The car’s inputs are represented as $[D, \delta]^{T}$ where $D$ belongs to the interval $[-1,+1]$, representing the duty cycle input to the electric motor, ranging from full braking at $-1$ to full acceleration at $1$. $\delta \in [-1,+1]$ corresponds to the steering angle. The car has a camera and observes a patch around its state $s$ as $D_s$. The objective function for this environment is $T(\tau)= |\underset{s \in \tau}{\bigcup} D_{s}|$, and for our problem context, we fix the length of the time horizon as $700$. 

\paragraph{\textbf{MuJoCo Ant Environment.}} In this environment, the state space dimension is $30$ and contains information about the robot’s pose and the internal actuator’s orientation.  The control input dimension is $8$, consisting of torque commands for each actuator. The Ant at any location $s$ covers locations in $2D$ space, $D_s$, and receives a reward based on it. The goal is to maximize $T(\tau)= |\underset{s \in \tau}{\bigcup} D_{s}|$. For detailed information about the Mujoco Ant environment, we refer the reader to \url{https://gymnasium.farama.org/environments/mujoco/ant/.}

\paragraph{\textbf{Graph Based Environment.}} In this environment, consisting of a set of nodes, each associated with a weight, the agent interacts with this space through sequential decision-making. There are two environmental operational modes: $M$ Mode and $SRL$ Mode. In M mode, the agent is required to maximize the total sum of weights for all uniquely covered nodes. Also, the agent is required to cover as many nodes as possible while maximizing the total weight. In SRL Mode, the agent aims to maximize the total sum of weights associated with all covered nodes, allowing for repeated coverage of previously visited nodes to contribute to the total reward.

\paragraph{\textbf{Entropy Based Environment.}} The entropy-based environment is designed to regulate the agent's exploration and exploitation behavior by utilizing entropy as an objective. It operates in two distinct modes: M Mode and SRL Mode. In M Mode, the agent aims to minimize entropy, encouraging more deterministic behavior by reducing the uncertainty in its trajectory. This results in a more structured and predictable exploration pattern. Conversely, in SRL Mode, the objective is to maximize entropy, which promotes exploration by maintaining uncertainty in the agent’s trajectory. This ensures the agent does not prematurely converge to a specific policy, thereby enhancing its ability to discover new or underexplored regions within the environment. The entropy-based approach provides a principled mechanism for balancing exploration and exploitation in reinforcement learning settings.

\begin{table}[htbp]
    \centering
    \caption{Reward functions for different environments}
    \label{tab:reward_functions}
    \resizebox{\textwidth}{!}{
    \begin{tabular}{lccccl}
        \toprule
        \textbf{Environment} & \textbf{Cont/Discrete} & \textbf{State Dim} & \textbf{Action Dim} & \textbf{Function} \\
        \midrule
        Car Racing & Continuous & 6 & 2 & \( T(\tau) = |\bigcup_{s \in \tau} D_s| \) \\
        MuJoCo Ant & Continuous & 30 & 8 & \( T(\tau) = |\bigcup_{s \in \tau} D_s| - \lambda \sum_{s \in \tau}\mathbf{1}(N_s > 1) \) \\
        GP (M) & Discrete & 2 & 5 & \( T(\tau) = \sum_{s \in \tau} w_s - \lambda \sum_{s \in \tau} N_s \) \\
        GP (SRL) & Discrete & 2 & 5 & \( T(\tau) = \sum_{s \in \tau} w_s \) \\
        Entropy (M) & Discrete & 2 & 5 & \( T_M(\tau) = \frac{1}{2}\log\det(\Sigma_{\tau}+10^{-6}I)+9.965784\cdot|\tau| \) \\
        Entropy (SRL) & Discrete & 2 & 5 & \( T_{SRL}(\tau) = \sum_{s \in \tau}\frac{1}{2}\log_2(\Sigma_{ss}+10^{-6})+9.965784 \) \\
        \bottomrule
    \end{tabular}}
\end{table}

\paragraph{\textbf{Experimental Setup.}}
All the Python codes are executed in HP Z2 workstations with 32 GB RAM and an Xeon(R) 3.0 GHz processor. We have compared our proposed SGPO approach with the existing SubPO \cite{prajapat2023submodular} approach. 

\subsection{Goals of the Study}\label{Sec:Goals}
In this study, we will address the following Research Questions (RQ).
\begin{itemize}
    \item \textbf{RQ1:} How do critic loss and policy loss vary with the number of epochs?
    \item \textbf{RQ2:} Varying number of epochs, how do the SGPO and SubPO perform with respect to the number of steps, coverage, and weight value?
    \item \textbf{RQ3:} How does the advantage vary while varying the number of epochs?
\end{itemize}
\subsection{Results and Discussion} 
In this section, we analyze the experimental results and address all the research questions that arise in Section \ref{Sec:Goals}. We subsequently present the results in tabular form for the discrete environments. All the objective functions are summarized in Table \ref{tab:reward_functions}. We experimented one hundred times independently for each environment and reported the objective function's mean value in Table \ref{tab:comparison}.
\begin{figure*}[htbp]
\centering
\begin{tabular}{ccc}
\includegraphics[scale = 0.16]{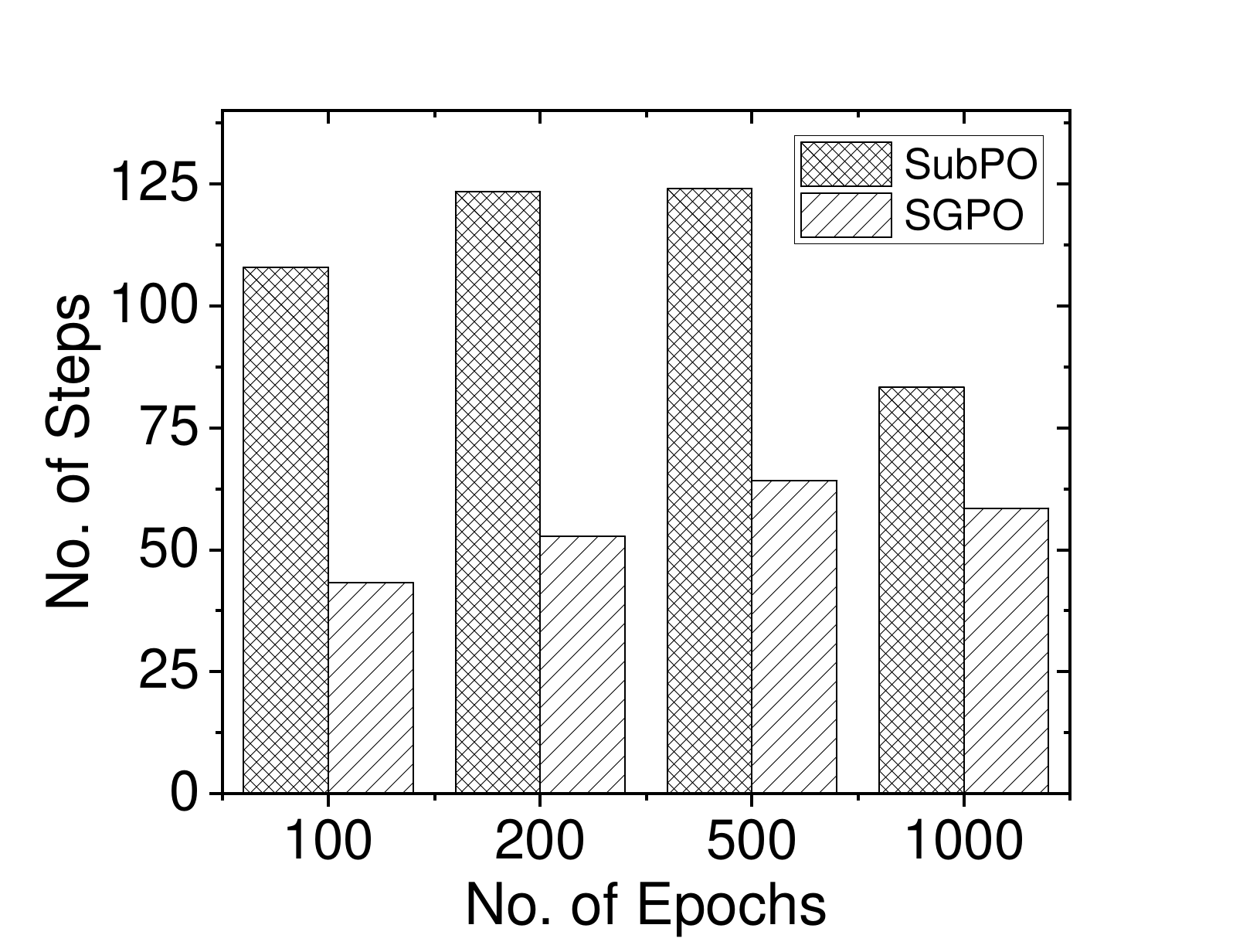}\hspace{-1em} & \includegraphics[scale = 0.16]{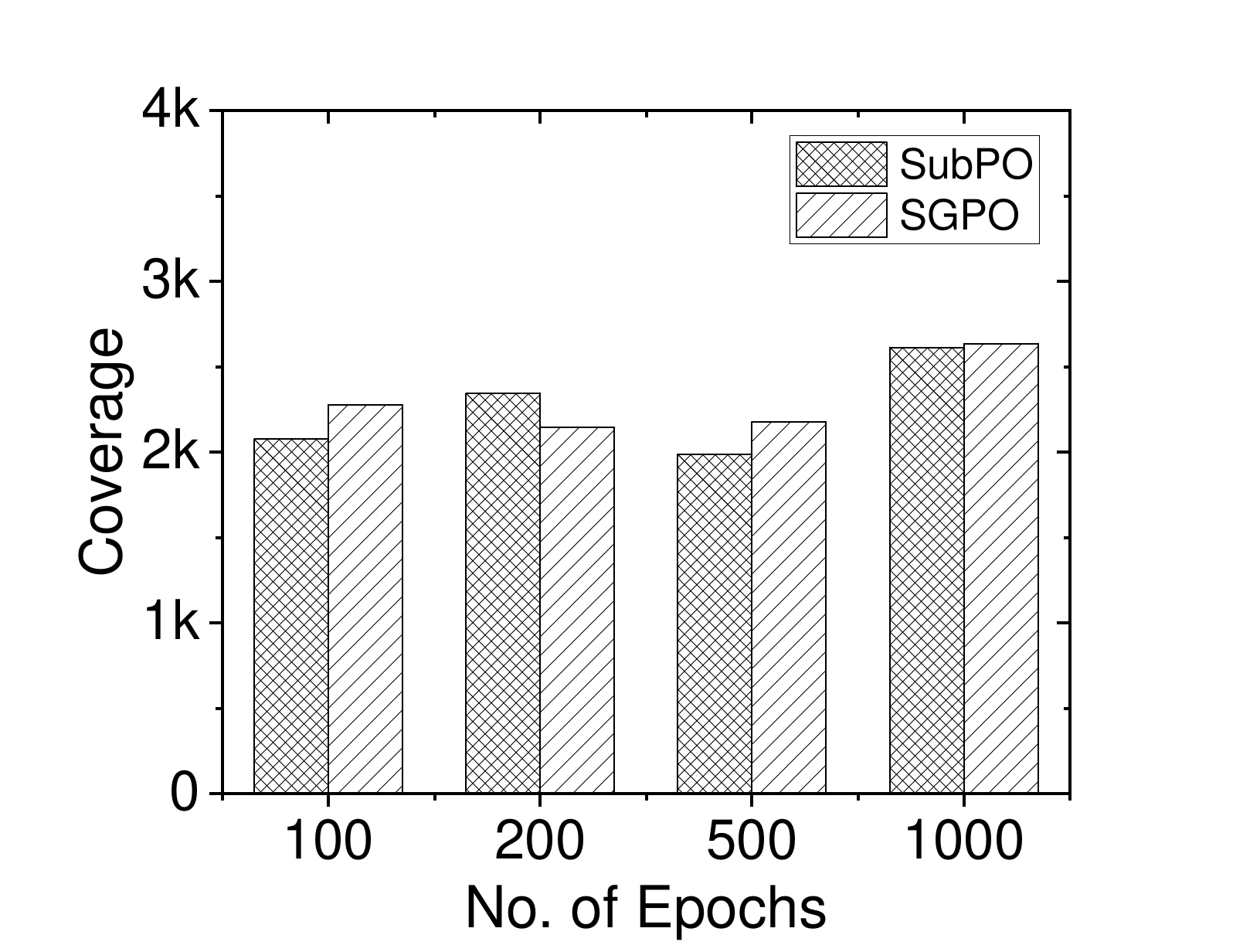}\hspace{-1em} & \includegraphics[scale = 0.16]{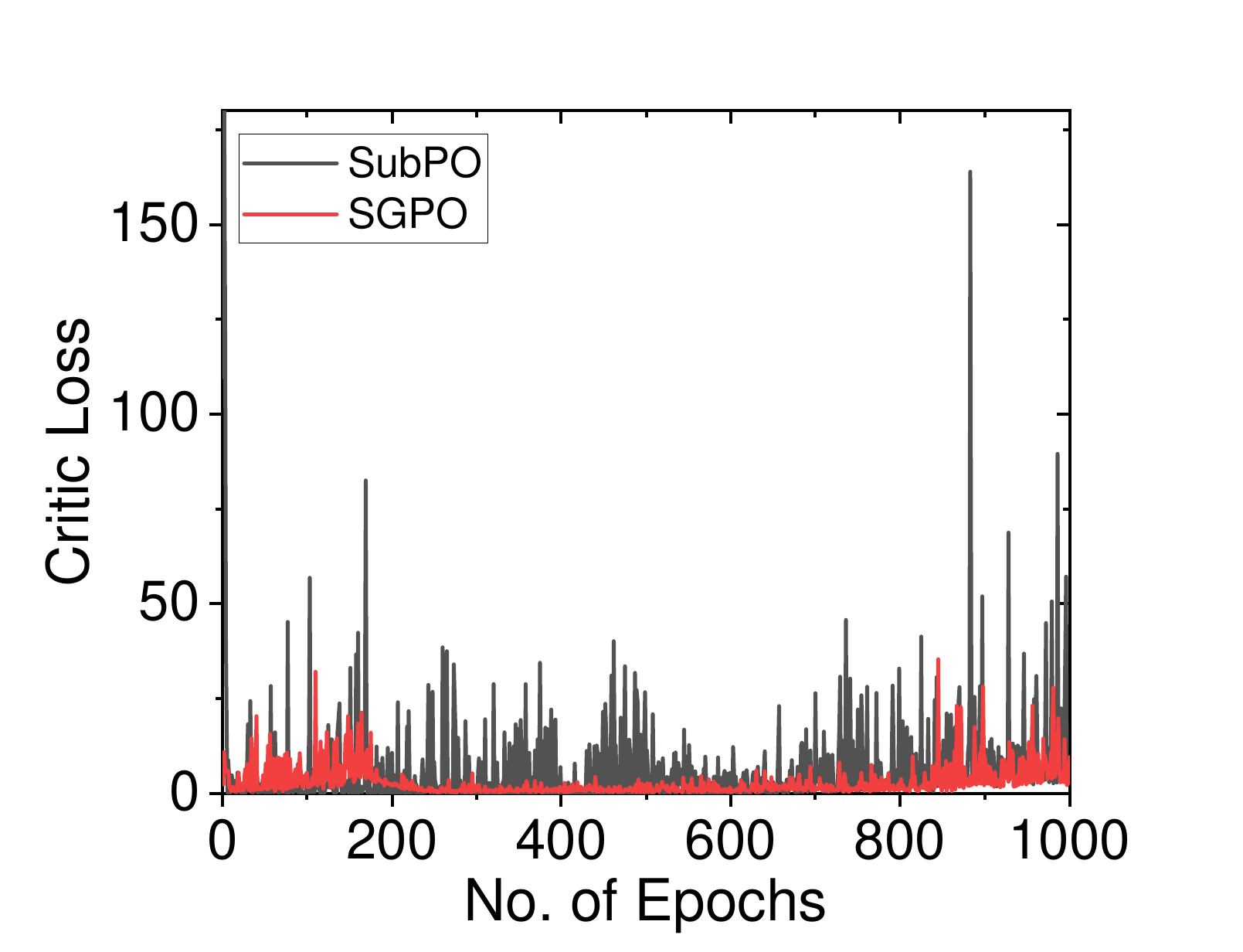} \\
\tiny{(a) Epochs Vs. Steps} &  \tiny{(b) Epochs Vs Coverage} & \tiny{(c) Epochs Vs. Critic Loss}  \\
\includegraphics[scale = 0.16]{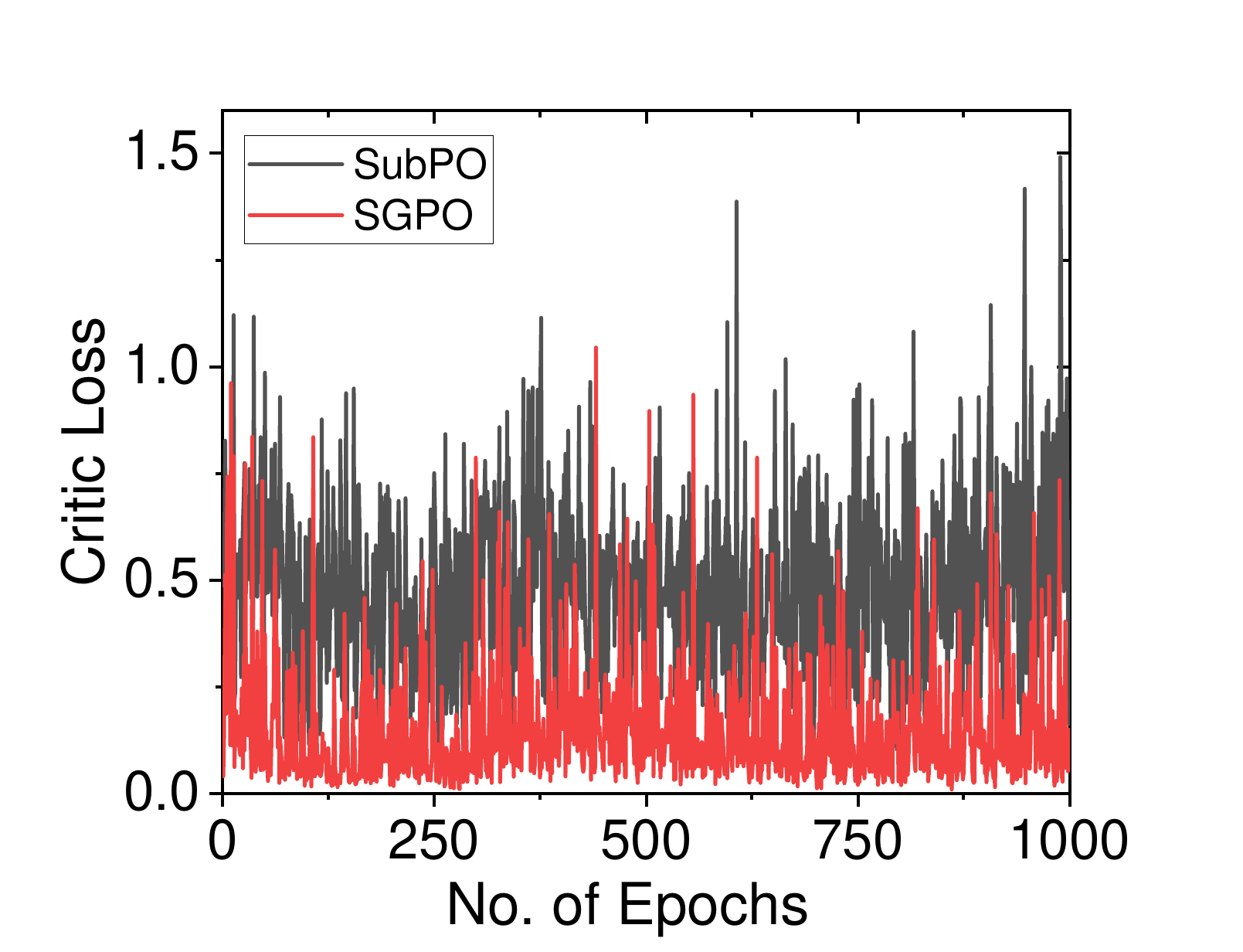}\hspace{-1em} & \includegraphics[scale = 0.16]{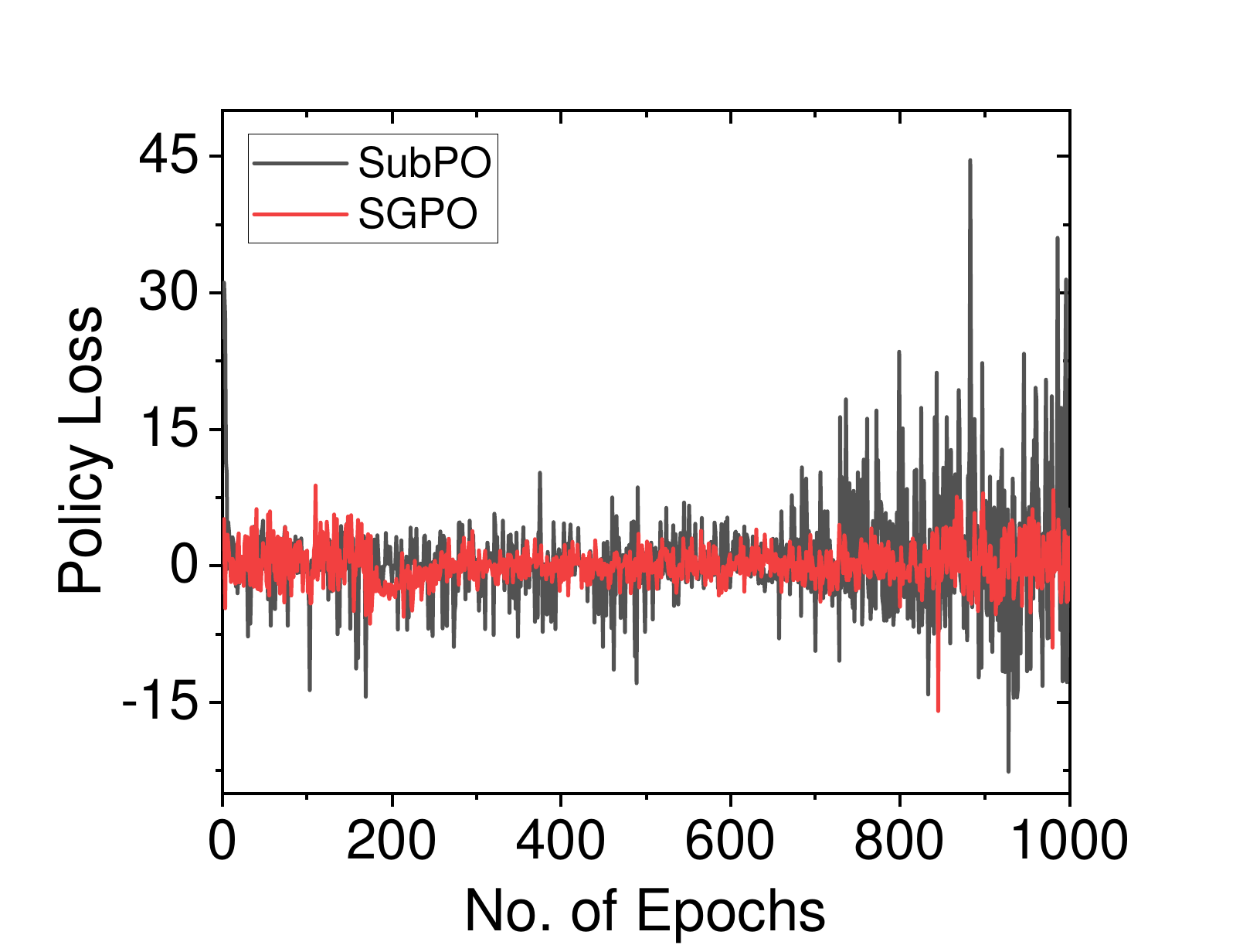}\hspace{-1em} & \includegraphics[scale = 0.16]{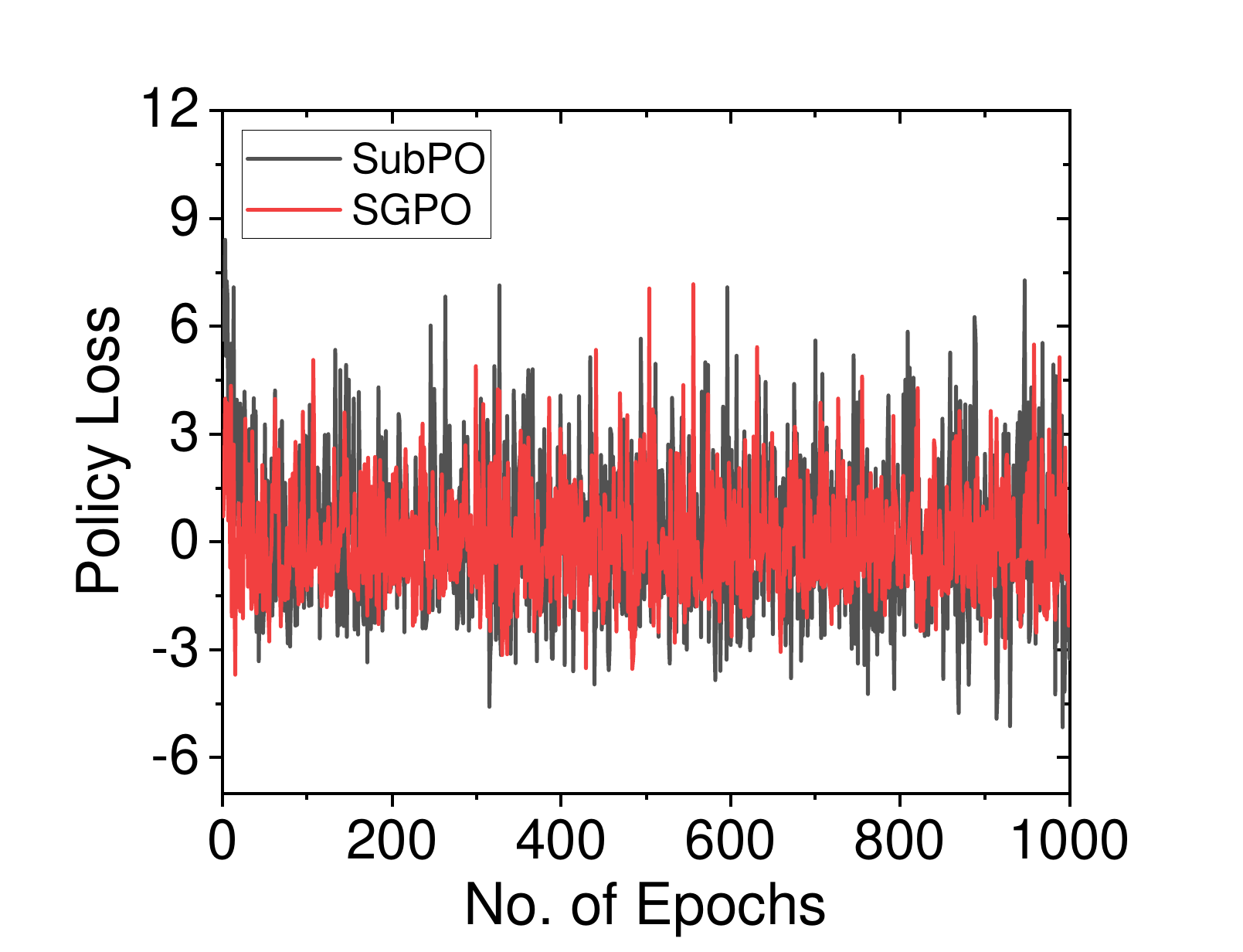}  \\
\tiny{(d) Epoch Vs. Critic Loss} & \tiny{(e) Epochs Vs. Policy Loss}  & \tiny{(f) Epochs Vs. Policy Loss} \\

\includegraphics[scale = 0.16]{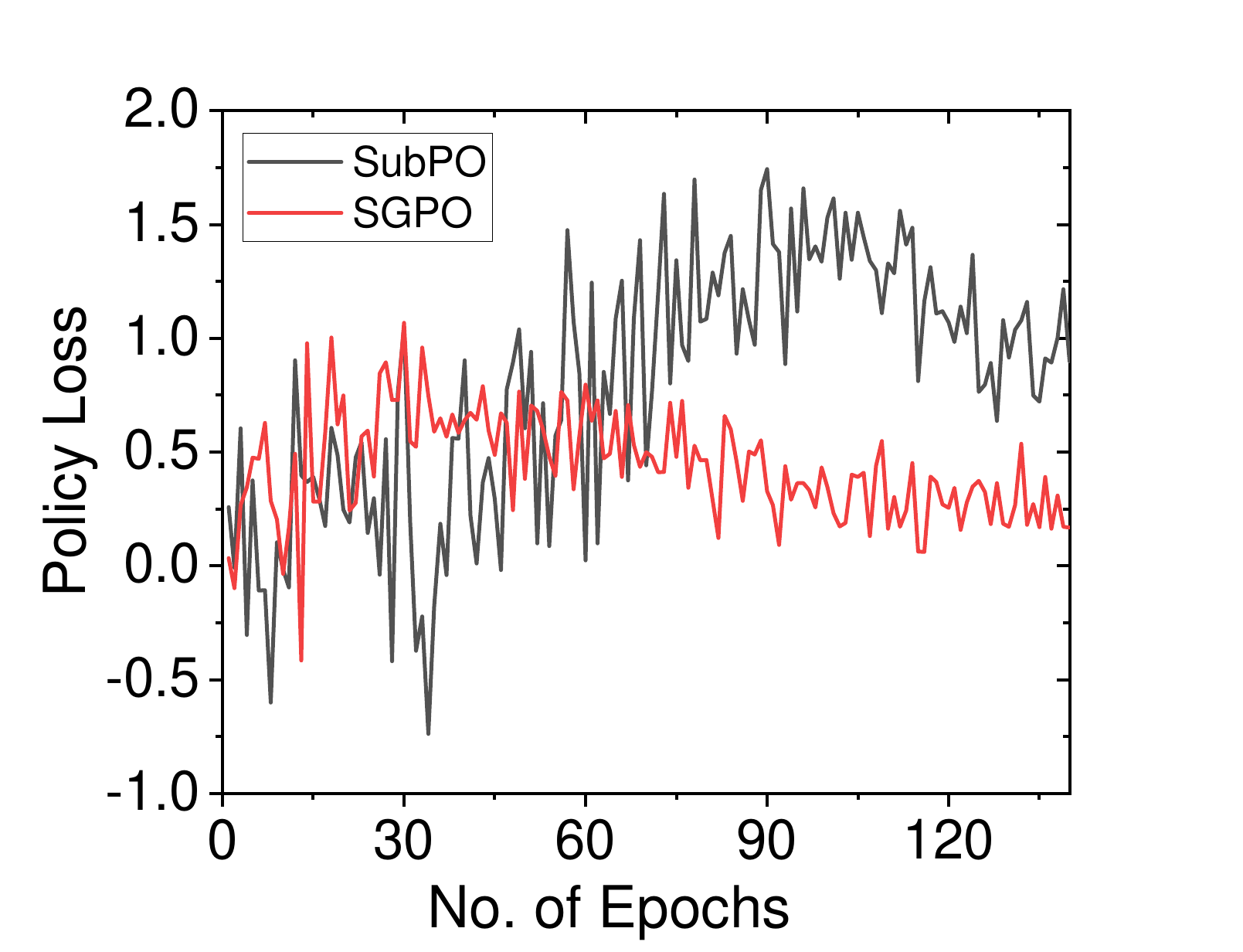}\hspace{-1em} & 
\includegraphics[scale = 0.16]{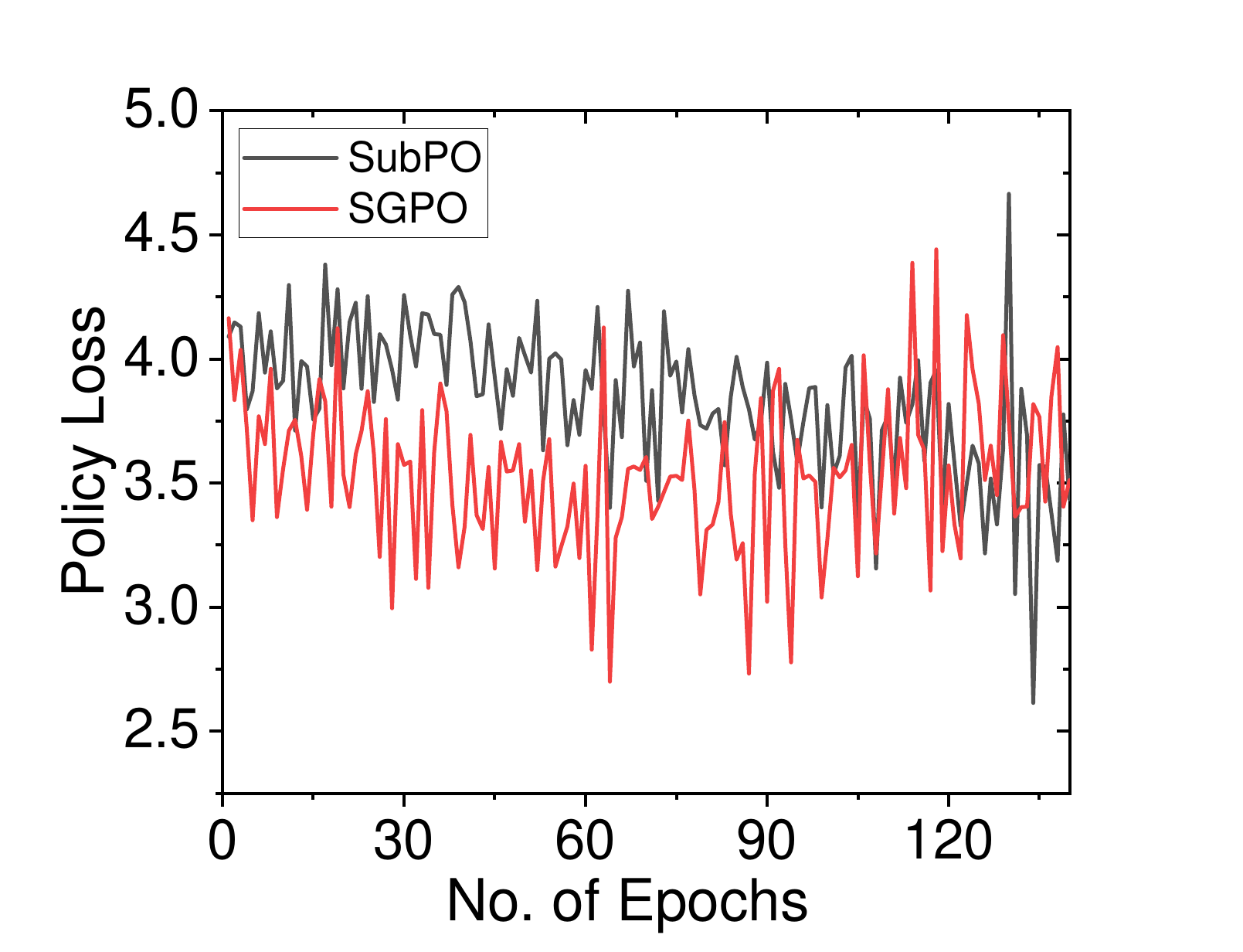}\hspace{-1em}  & 
\includegraphics[scale = 0.16]{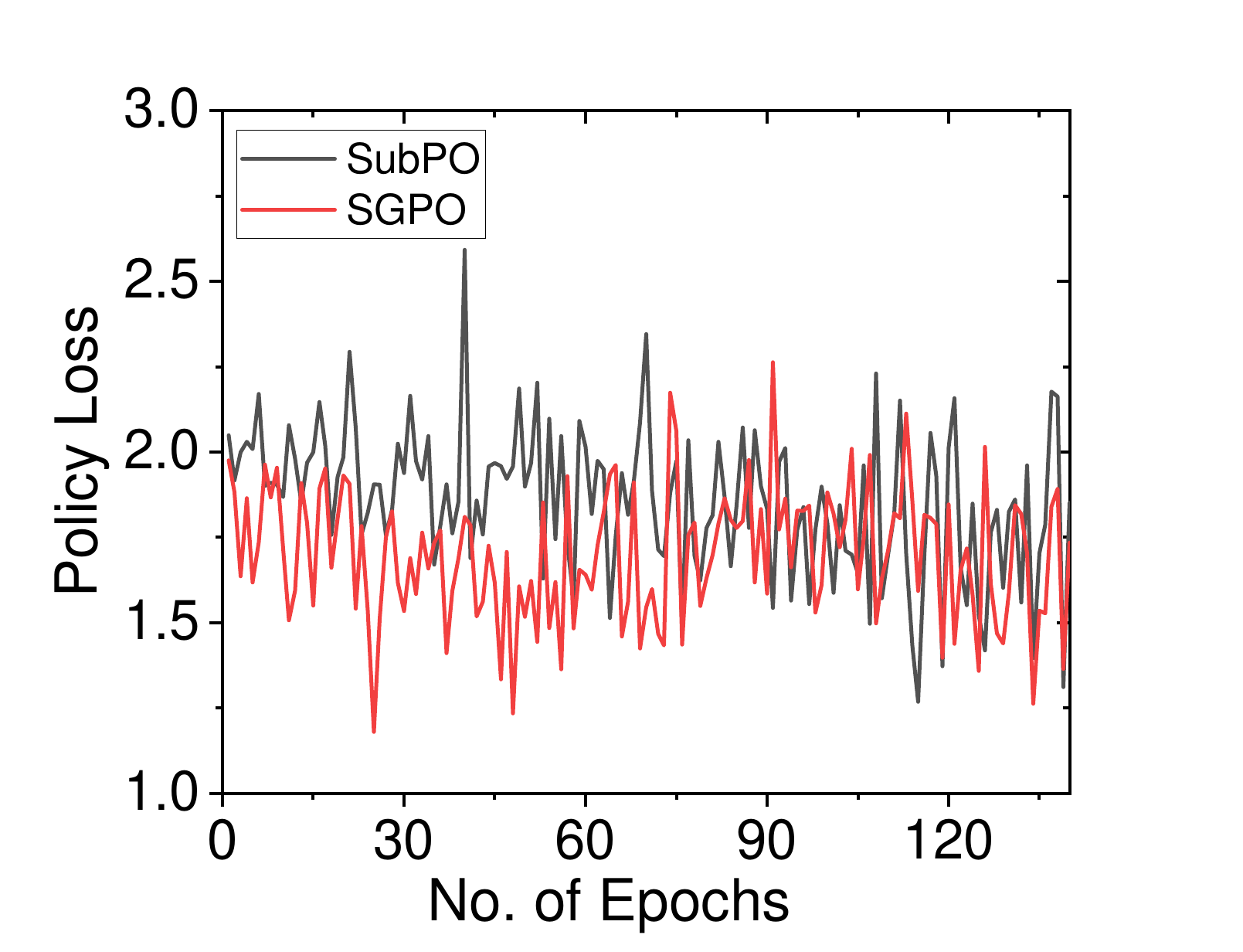} \\
\tiny{(g) Epochs Vs. Policy Loss} & \tiny{(h) Epochs Vs. Policy Loss} & \tiny{(i) Epochs Vs. Policy Loss} \\
\includegraphics[scale = 0.16]{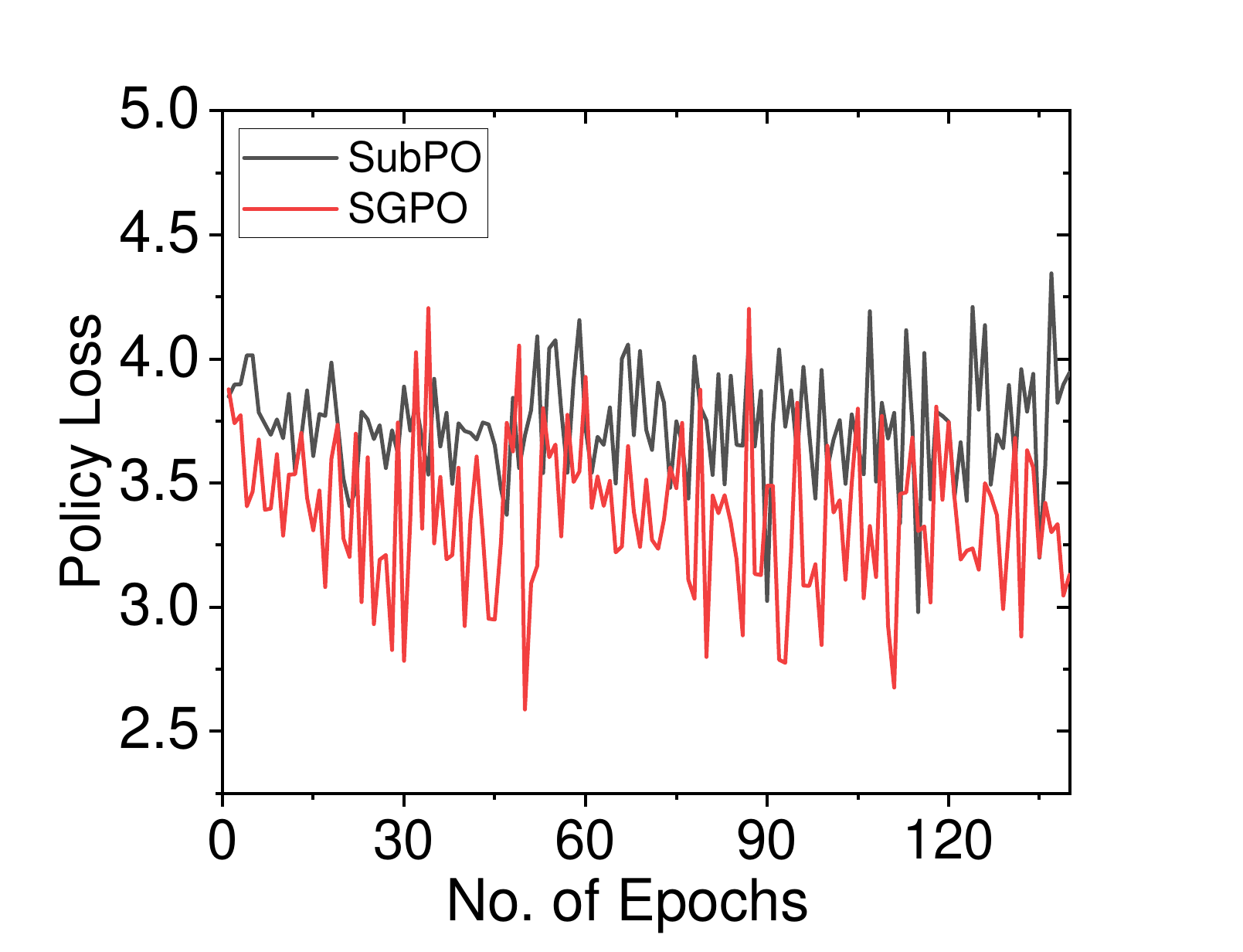}\hspace{-1em} & 
\includegraphics[scale = 0.16]{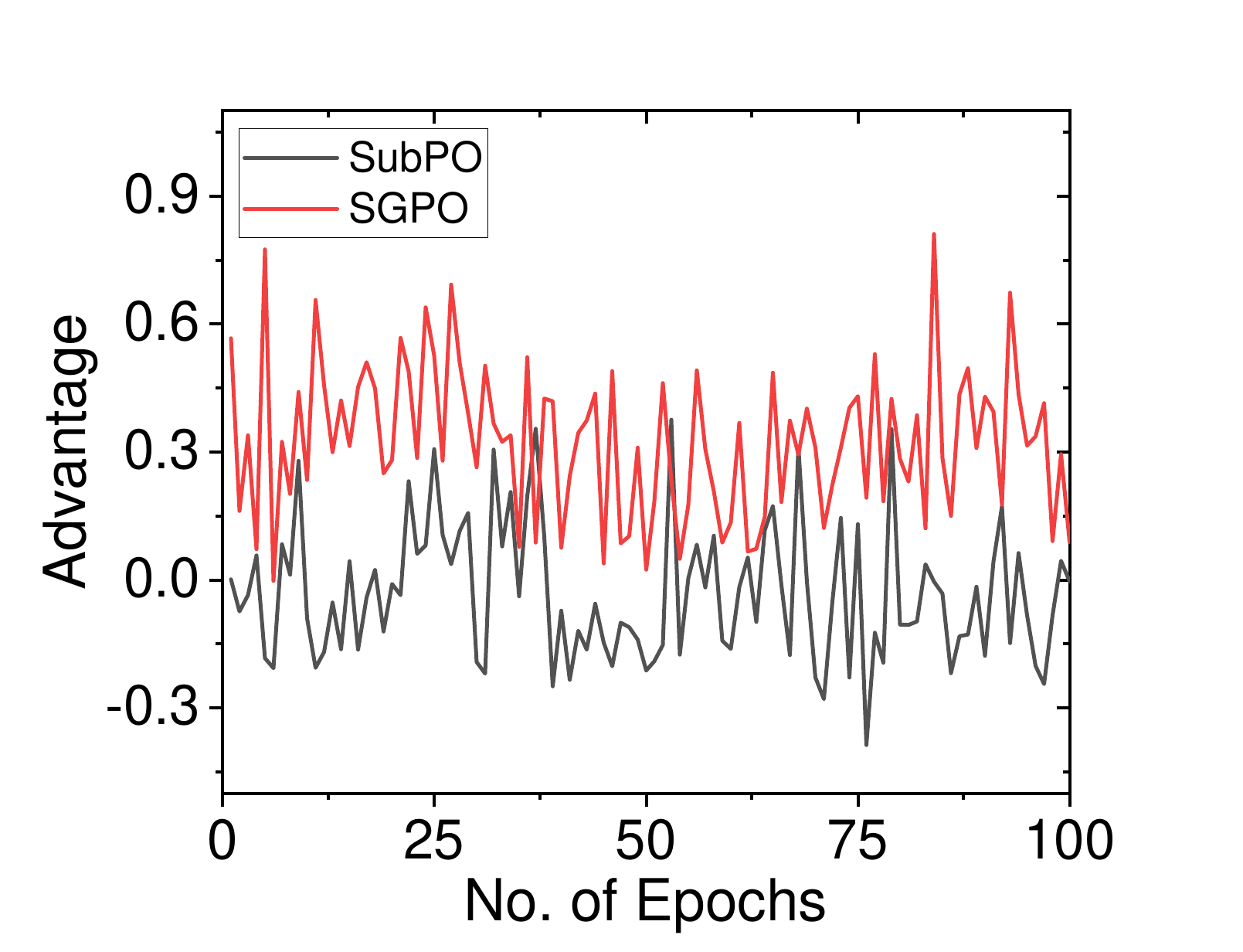}\hspace{-1em} & 
\includegraphics[scale = 0.16]{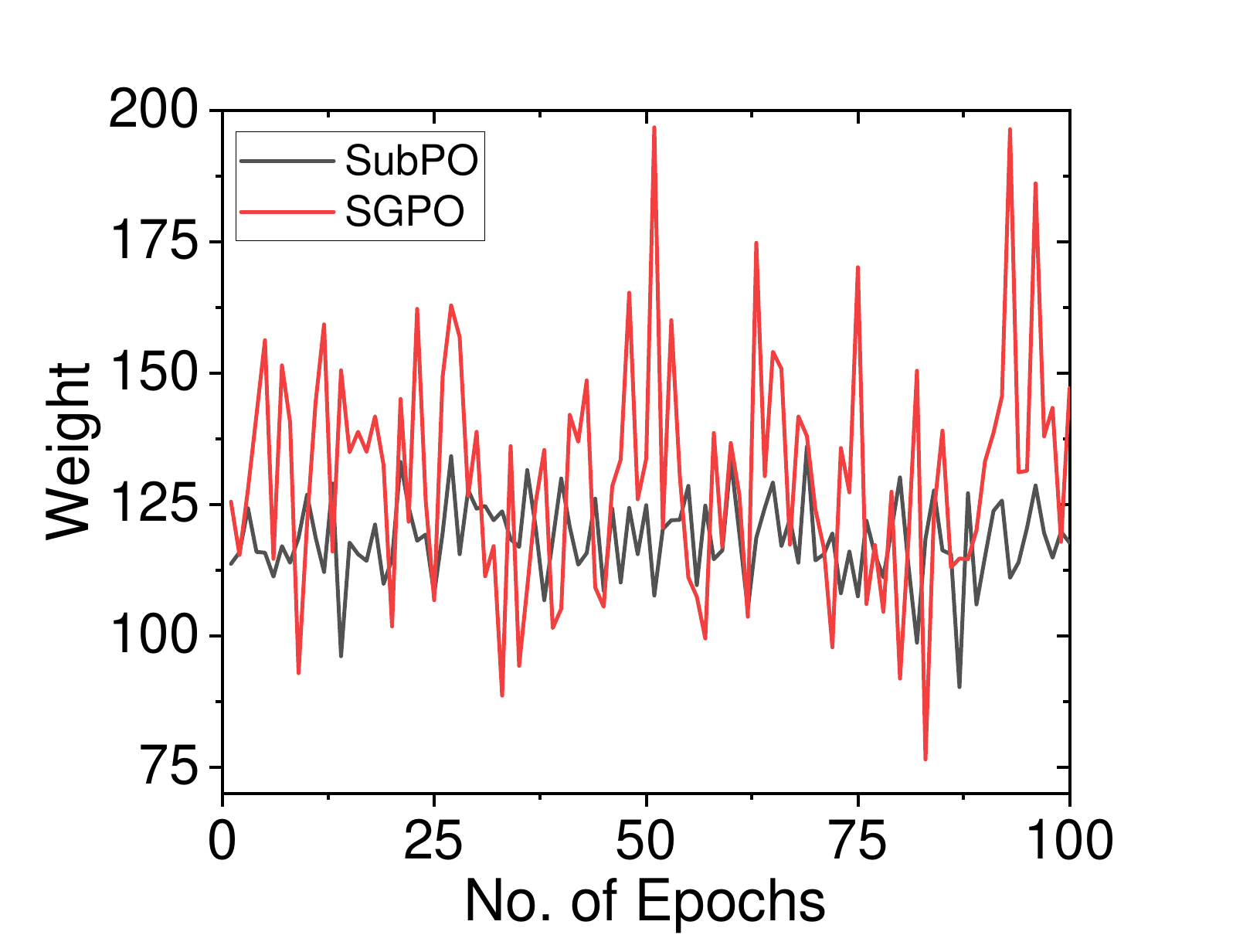} \\
\tiny{(j) Epochs Vs. Policy Loss} & \tiny{(k) Epochs Vs. Advantage} & \tiny{$(\ell)$ Epochs Vs. Weights} \\ 
\end{tabular}
\caption{ Varying Epochs with Steps, Coverage, Critic Loss, Policy Loss, Advantage, Weight in Car racing $(a,c,e)$, Mujoco Ant $(b,d,f,k)$, Graph Based-M $(g, \ell)$, Graph Based-SRL $(h)$, Entropy Based-M $(i)$, Entropy Based-SRL $(j)$}
\label{Fig:Figure}
\end{figure*}
\vspace{-0.25cm}

\paragraph{\textbf{Car Racing Environment.}} In Figure \ref{Fig:Figure}(a), we illustrate the variation in the number of horizon steps required to complete one lap of the racing track during the test time with the number of epochs used for training. The test was carried out with 100 agents, and the mean number of steps required to complete the race was recorded. We observe that the minimum number of steps required to complete one lap first rises and then falls as the number of training epochs increases, indicating improved training. The model trained using SGPO requires fewer steps compared to the model trained with SubPO. We show the variation in policy loss with the number of epochs during training, as shown in Figure \ref{Fig:Figure}(e). We observe that while training with the SubPO algorithm, the policy loss oscillates within a wider range of high values. In contrast, in the case of SGPO, the loss remains in a relatively lower range, indicating greater stability during training. Next, Figure \ref{Fig:Figure}(c) shows the variation in critic loss with the number of training epochs. The loss curve for SubPO exhibits large fluctuations, whereas the loss curve for SGPO is comparatively smoother, indicating better learning and convergence.

\paragraph{\textbf{Mujoco Ant Environment.}}
Figure \ref{Fig:Figure}(b) shows the variation in total coverage by an agent with the number of training epochs. The test was carried out with 100 agents, and their mean coverage was recorded. For the model trained on 1000 epochs, we observe that SubPO achieves coverage of 2612 units, whereas SGPO achieves a slightly higher coverage of 2636 units. Next, in Figure \ref{Fig:Figure}(d), we illustrate the number of epochs versus the critic loss. The critic loss curve for SubPO exhibits larger oscillations, whereas SGPO demonstrates better convergence. Although both loss curves exhibit fluctuations, the variance in the SubPO loss curve is significantly higher than that of SGPO, indicating less stability during training. The number of epochs versus advantage in this environment is presented in Figure \ref{Fig:Figure}(k), which compares the advantage values of SubPO and SGPO over 100 epochs in a test environment. Advantage, a key metric in RL, represents the relative improvement of an action over the expected value baseline. The figure shows that SGPO exhibits higher and more fluctuating advantage values compared to SubPO, indicating potentially more exploratory behavior.

\paragraph{\textbf{Graph Based Environment.}}
In the graph-based environment, we vary the number of epochs for two different modes: M and SRL. To analyze the learning and performance of the agent network, we observe that in mode M, the loss curves for both SubPO and SGPO exhibit oscillations, but the one for SGPO shows better convergence, as shown in Figure \ref{Fig:Figure}(g). In the case of SRL mode, both loss curves have spikes and dips, but the loss for SGPO remains at relatively lower values, as shown in Figure \ref{Fig:Figure}(h). Next, we have presented the weight value in M-mode with the varying number of epochs in Figure \ref{Fig:Figure}$(\ell)$. Here, the agent's goal is to maximize the weight. We observe that the weight curve for SubPO remains relatively stable with minor fluctuations, whereas SGPO exhibits larger oscillations with higher peaks. This suggests that SGPO explores a broader range of weight values, potentially leading to higher maximum weight values.

\paragraph{\textbf{Entropy Based Environment.}} In the entropy-based environment, we observe the variation of policy loss with the number of training epochs. In the case of M mode, the agent focuses on minimizing entropy, meaning it aims to reduce the uncertainty in its trajectory and converge to more deterministic behaviors. The policy loss curve for SubPO shows higher fluctuations, whereas SGPO exhibits a more stable trend, indicating better convergence, as shown in Figure \ref{Fig:Figure}(i). In SRL mode, the agent aims to maximize entropy, promote exploration, and avoid early convergence with deterministic behaviors. The policy loss curve for SGPO remains at lower values compared to SubPO, suggesting improved stability and adaptability in exploration, as shown in Figure \ref{Fig:Figure}(j).

\begin{table}[h!]
    \centering
        \caption{Comparison of Objective Function Values across Discrete Environments}
    \begin{tabular}{lcc}
        \toprule
        \textbf{Environment} & \multicolumn{2}{c}{\textbf{Objective Function}} \\
        \cmidrule(lr){2-3}
        & \textbf{SubPO} & \textbf{SGPO} \\
        \midrule
        Graph-based(M) & 119.3361648 & 129.792734 \\
        Graph-based(SRL) & 54.54513195 & 77.3658894 \\
        Entropy-based(M) & 171.211965 & 237.8183107 \\
        Entropy-based(SRL) & 379.1436629 & 386.2395683 \\
        \bottomrule
    \end{tabular}
    \label{tab:comparison}
\end{table}

\section{Concluding Remarks} \label{Sec:Conclusion}
In this paper, we have considered a variant of the reinforcement learning problem where the reward function is submodular. We have proposed a pruned submodularity graph-based approach to solve this problem. The proposed SGPO has been analyzed to understand its time and space requirements and performance guarantee. We show that the proposed SGPO will achieve a constant factor approximation guarantee under the simplified assumptions. We have performed many experiments with continuous and discrete action spaces and observed that SGPO leads to more reward compared to the existing method. Now, this study can be extended to the broad class of submodular objectives, which are relevant to many practical, real-world problems.
\bibliographystyle{splncs04}
\bibliography{Paper}

\end{document}